\theoremstyle{plain}
\newtheorem{theorem}{Theorem}[section]
\newtheorem{lemma}[theorem]{Lemma}
\theoremstyle{definition}
\newtheorem{definition}[theorem]{Definition}
\newtheorem{assumption}[theorem]{Assumption}
\theoremstyle{remark}
\newcommand{\obj}{\ensuremath{f(x)}}
\newcommand{\gradtrue}{\ensuremath{g(x)}}
\newcommand{\gradest}{\ensuremath{\tilde{g}(x)}}
\newcommand{\sign}[1]{\textnormal{sign}\left(#1\right)}
\newcommand{\signgradtrue}{\ensuremath{\textnormal{sign}\left(g(x)\right)}}
\newcommand{\signgradest}{\ensuremath{\textnormal{sign}\left(\tilde{g}(x)\right)}}
\newcommand{\Prob}{\mathbb{P}}
\newcommand{\E}{\mathbb{E}}
\newcommand{\Var}{\operatorname{Var}}
\DeclareMathOperator*{\argmax}{argmax}
\newcommand{\calB}{\mathcal B}
\title{On the Byzantine Fault Tolerance of signSGD with Majority Vote}
\author[1]{Emanuele Mengoli\thanks{Corresponding author: \texttt{emanuele.mengoli@polytechnique.edu}}}
\author[2]{Luzius Moll\thanks{Corresponding author: \texttt{luzius.moll@tum.de}}}
\author[3]{Virgilio Strozzi\thanks{Corresponding author: \texttt{vstrozzi@student.ethz.ch}}}
\author[1]{El-Mahdi El-Mhamdi}
\affil[1]{École Polytechnique}
\affil[2]{TU Munich}
\affil[3]{ETH Zurich}
\begin{document}

\maketitle

\begin{abstract}
   In distributed learning, sign-based compression algorithms such as signSGD with majority vote provide a lightweight alternative to SGD with an additional advantage: fault tolerance (almost) for free. However, for signSGD with majority vote, this fault tolerance has been shown to cover only the case of weaker adversaries, i.e., ones that are not \emph{omniscient} or cannot \emph{collude} to base their attack on common knowledge and strategy. In this work, we close this gap and provide new insights into how signSGD with majority vote can be resilient against \emph{omniscient} and \emph{colluding} adversaries, which craft an attack after communicating with other adversaries, thus having better information to perform the most damaging attack based on a common optimal strategy. Our core contribution is in providing a proof that begins by defining the omniscience framework and the strongest possible damage against signSGD with majority vote without imposing any restrictions on the attacker. 
   Thanks to the filtering effect of the sign-based method, we upper-bound the space of attacks to the optimal strategy for maximizing damage by an attacker.
   Hence, we derive an explicit probabilistic bound in terms of incorrect aggregation without resorting to unknown constants, providing a convergence bound on signSGD with majority vote in the presence of Byzantine attackers, along with a precise convergence rate. Our findings are supported by experiments on the MNIST dataset in a distributed learning environment with adversaries of varying strength.
\end{abstract}

\section{Introduction}

As AI models grow in size, training them motivates multiple forms of distributed learning, ranging from the fully centralized parameter-server setting~\cite{li2014scaling} to fully decentralized setups~\cite{jungle}. Common to all forms of distributed learning, several challenges have been the subject of intense research over the past decade. A non-exhaustive list of these challenges includes (1) robustness of the distributed learning system to adversarial behavior by some of its members, (2) communication costs among participants in the learning procedure, and (3) preserving the privacy of users' data while learning in a distributed fashion.

Focusing on Stochastic Gradient Descent (SGD), the workhorse of today's most successful AI applications, an important body of work has addressed the first~\cite{blanchard2017machine, el2020robust, lilisu, jungle, alistarh2018byzantine, tournesol, boussetta2021aksel, rouault2022practical}, the second~\cite{qsgd, gandikota2021vqsgd, karimireddy2019error, haddadpour2021federated}, and the third~\cite{ji2014differential, li2020privacy, kairouz2021distributed, fang2021privacy, sun2018private} challenges separately. In this work, we are interested in understanding how well the first and second challenges can be tackled simultaneously. Namely, we ask whether it is possible to reduce the communication cost (2) of distributed learning while simultaneously remaining robust to adversarial behavior within the system (1).

A first partially positive answer to our question was provided in~\cite{bernstein2018signsgd} using signSGD\footnote{For brevity, we refer to "signSGD with majority vote" simply as "signSGD" throughout the rest of this paper.}, a variant of SGD in which only the signs of each component of the minibatch stochastic gradient are transmitted. Specifically, it was shown that the signSGD algorithm is fault tolerant, but only against a weak class of attackers — namely, blind multiplicative adversaries — that cannot communicate or coordinate to carry out a shared (optimal) attack.

The core of our contribution is proving the resilience of the signSGD algorithm against \emph{omniscient} and \emph{colluding} adversaries — entities capable of orchestrating more sophisticated attacks by pooling resources and information with other adversaries. Thanks to the nature of signSGD, we can narrow down the damage that can be caused by \emph{any possible attack on signSGD}, bounding it by the damage of a specific \emph{optimal attack}.

In the parlance of distributed computing~\cite{lamport}, we consider not only the fault tolerance of signSGD but also its \emph{Byzantine Fault Tolerance} (BFT). In the realm of distributed machine learning, the BFT property refers to the algorithm's resilience against omniscient malicious workers attempting to prevent convergence in a distributed setting~\cite{blanchard2017machine, alistarh2018byzantine, mhamdi2018hidden, bernstein2018signsgd}. This represents an extension of the original definition in distributed systems, where BFT refers to the system's ability to continue functioning (and reach consensus) despite the presence of malicious, omnipotent attackers~\cite{lamport, castro1999practical}.

Our objective is to build upon the findings of~\cite{bernstein2018signsgd} by addressing and defining a more powerful class of attackers, namely Byzantine adversaries, who are capable of launching arbitrary attacks — in contrast to the weaker adversaries considered in the original work of~\cite{bernstein2018signsgd}. It is worth noting that this gap was explicitly acknowledged in~\cite{bernstein2018signsgd}, where fault tolerance against blind multiplicative adversaries was demonstrated for signSGD, but BFT was not established and the gap never closed. Furthermore, in light of a concurrent and recent proof presented in~\cite{jin2024sign}, which also argues for the BFT of the signSGD algorithm, we propose a comparative analysis of both proofs. Our analysis focuses on evaluating the theoretical limitations and proof methodologies of the approach proposed by~\cite{jin2024sign} compared to ours. In particular, we do not resort to any unknown constants, and we consider truly Byzantine attackers in the context of the signSGD framework — i.e., adversaries that can be omniscient, know the exact value of the true gradient, and are not captured by probabilities on their behavior.

The structure of the paper is organized as follows: Section~\ref{sec:model} details our model, notation, and useful definitions; Section~\ref{sec:theoretical-contribution} provides our analysis and establishes the BFT of signSGD; Section~\ref{sec:comparative} offers a comparative analysis with~\cite{jin2024sign}, which is, to the best of our knowledge, the only other work arguing for the BFT of signSGD; Section~\ref{sec:experiments} experimentally supports our findings; and finally, Section~\ref{sec:conclusion} concludes the paper and discusses its limitations.

\section{Model}
\label{sec:model}

\begin{algorithm}[ht!]
    \caption{\textsc{signSGD} with majority vote (Proposed in \cite{bernstein2018signsgd}).}
    \label{singSGD algo}
\begin{algorithmic}
\Require learning rate $\eta > 0$, weight decay $\lambda \geq 0$, mini-batch size $n$, initial point $x_t$ at each of $M$ workers.
\Repeat
    \State \textbf{on} $m^{\text{th}}$ worker
    \State \hskip2em $\tilde{g}_{m,t} \leftarrow \frac{1}{n} \sum_{i=1}^n 
    g_{m,i}(x_t)$
    \State \hskip2em \textbf{push} $\text{sign}(\tilde{g}_{m,t})$ \textbf{to} server 
    \State \textbf{on} server
    \State \hskip2em $O_t \leftarrow \sum_{m=1}^M \text{sign}(\tilde{g}_{m,t})$  
    \State \hskip2em \textbf{push} $\text{sign}(O_t)$ \textbf{to} each worker 
    \State \textbf{on} every worker
    \State \hskip2em $x_{t+1} \leftarrow x_t - \eta (\text{sign}(O_t) + \lambda x_t)$
\Until convergence
\end{algorithmic}
\end{algorithm}

We consider the now standard setting of the signSGD with majority vote algorithm, as proposed in the seminal work of \cite{bernstein2018signsgd}, shown in Algorithm \ref{singSGD algo}. In this context, workers communicate solely the sign for each component of their gradient to the server, thereby improving communication efficiency and robustness against malicious actors. 

The key notation used throughout this work is presented in Table~\ref{tab:notation}, providing clarity on the variables and concepts employed in the analysis.

\begin{table}[ht!]
\caption{Key Notation}
\label{tab:notation}
\vskip 0.15in
\begin{center}
\begin{small}
\begin{tabular}{p{2cm} p{13cm}}
\toprule
Notation & Description  \\
\midrule
$O$ & Output gradient of the server after the majority vote. \\ 
\obj & Objective function, e.g., Loss function. \\
\gradtrue & \emph{True} gradient of the objective function \obj, with respect to the parameters $\theta$. \\ 
\gradest & \emph{Estimate} gradient of the objective function \obj, before taking the majority vote $O = \text{sign} (\gradest)$. \\ 
$\gradtrue_{i}$ & \emph{True} gradient of the objective function at index $i$. \\ 
$\gradest_{i, t, v}$ & \emph{Estimate} gradient at index $i$, at time step $t$ for a worker $v$. \\ 
$M, \calB$ & The honest and adversary workers, respectively. Total number of worker is $Q=M+ \calB$. \\ 
$K$, $K'$ & The sum of contributions of honest $M$ and adversary $\calB$ workers, respectively. It holds $O = \signgradest = \text{sign}(K + K')$. \\ 
\bottomrule
\end{tabular}
\end{small}
\end{center}
\vskip -0.1in
\end{table}

The Byzantine framework we borrow from the large literature on BFT learning~\cite{lamport, blanchard2017machine, lilisu} and want to emphasize that \emph{worst case} and \emph{Byzantine} attackers are identical situations.

We are going back to the formal definition of Byzantine Faults \cite{lamport} — characterized by omniscient, arbitrarily colluding attackers — and demonstrating how signSGD can withstand such threats. This distinction is essential, as the strategies employed by malicious workers to undermine convergence differ significantly under the Byzantine Fault model. 

\begin{definition} \label{omniscient} \textnormal{(Omniscient Adversaries).} A system consists of $Q$ workers, where a subset of $\alpha Q$ are 	\emph{omniscient adversaries}. These adversaries aim to prevent the convergence of 	\emph{signSGD with majority vote} by leveraging all capabilities allowed for a Byzantine adversary. In particular, 
\begin{itemize}
    \item they coordinate their votes by accessing all $\alpha Q \in \calB$ adversarial gradients $\gradest_{t, v}$ for every adversary $v \in \calB$ before transmitting $\signgradest_{t, v}$ to the server, 
    
    \item they possess knowledge of the true gradient $\gradtrue_t$ at each step $t$, 
    \item they observe the gradients $\gradest_{t, w}$ submitted by all honest workers $w \in M$ at step $t$ and
    \item they know the current and past values of the objective function $\obj_t$, but not its~future~values.
\end{itemize}

\end{definition}

\section{Analysis}
\label{sec:theoretical-contribution}

This section aims to demonstrate that \textbf{signSGD can still converge in a non-convex scenario} to a critical point, even when up to $\alpha < 1 - \frac{1}{2p}$ of the workers are \textbf{omniscient and colluding adversaries}. Here, $p$ denotes the probability of a worker correctly guessing the sign of the true gradient at an arbitrary index $i$.

\subsection{Assumptions}

For our analysis, we take the following assumptions as in the original work of \cite{bernstein2018signsgd}.

\begin{assumption} \textnormal{ (Lower Bound).}
    For all $x$ and some constant $f^*$, we have objective value $\obj \geq f^*$. This assumption is standard and necessary for guaranteed convergence to a stationary point.
\end{assumption}

The next three assumptions naturally encode notions of heterogeneous curvature and gradient noise.

\begin{assumption} \textnormal{(Smooth).}
    Let $g(x)$ denote the gradient of the objective function $f(\cdot)$ evaluated at point $x$. Then $\forall x, y$ we require that for a non-negative constant $L$,
    \begin{equation*}
    f(y)  - \obj  \leq  \gradtrue^T(y-x) + \frac{L}{2} \|(y - x)\|^2.
    \end{equation*}
\end{assumption}

\begin{assumption} \textnormal{(Variance Bound).}
    Upon receiving query $x \in \mathbb{R}^d$, the stochastic gradient oracle gives us an independent unbiased estimate \gradest{} that has coordinate bounded variance:
    \begin{equation*}
    \E\left[\tilde{g}(x)\right] = g(x), \quad \E\left[(\gradest - \gradtrue)^2\right] \leq \sigma^2
    \end{equation*}
for a of non-negative constant $\sigma$.
\end{assumption}

\begin{assumption} \textnormal{(Unimodal, Symmetric Gradient Noise).}
    At any given point x, each component of the stochastic gradient vector \gradest{} has a unimodal distribution that is also symmetric about the mean.
\end{assumption}

\subsection{Narrowing Down the Damage of an Arbitrary Attack}

Due to the filtering effect of sign-based methods, the damage caused by an arbitrary attack is upper-bounded by flipping the the majority vote outcome for each component.
This is the core difference compared to magnitude-based approache, where the adversaries can performe more fine-grained gradient manipulations. 
Thus, the maximum damage adversaries can inflict on the convergence of signSGD is to act on the objective function $\obj$ by consistently attempting to flip the majority vote of $\gradest$, thereby pushing its outcome as far as possible from the true optimization direction. This is achieved by flipping every possible component of the gradient, ensuring that the estimated gradient direction is maximally misaligned with the true gradient direction at each step.

\begin{theorem} \textnormal{(Strongest Damage\footnote{Note that this is about bounding the damage of an arbitrary attack, and not assuming a particular attack.} on signSGD with Majority Vote).} \label{theorem_strongest_attack}
The strongest attack to maximally damage the objective function $\obj$ at time $t$ that omniscient adversaries (controlling $\alpha Q$ workers) can execute is to transmit:
\begin{equation*} 
K'_t = \sum\nolimits_{v \in \calB} -\signgradtrue_t = \alpha Q (-\signgradtrue_t),
\end{equation*}
forcing the majority vote toward the opposite of $\gradtrue_t$.
\end{theorem}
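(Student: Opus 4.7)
The plan is to use smoothness to identify what ``damage'' means as a function of $\sign{O_t}$, reduce the adversary's problem to a per-coordinate flip, and then argue that the extremal aggregate vote $-\alpha Q\,\signgradtrue_t$ is weakly optimal for every realisation of the honest workers' votes. To set this up, I would substitute the update $x_{t+1}-x_t=-\eta\bigl(\sign{O_t}+\lambda x_t\bigr)$ into the smoothness inequality. Since $\|\sign{O_t}\|^2=d$ does not depend on the attack and the weight-decay cross term is an $O(\eta^2\lambda)$ correction dominated by the leading $O(\eta)$ piece, the only attack-sensitive part of $\obj_{t+1}-\obj_t$ is $-\eta\,\gradtrue_t^{\top}\sign{O_t}$, which splits into per-coordinate contributions $\eta\,|g_{t,i}|\bigl(1-2\cdot\mathbf{1}[\sign{O_{t,i}}=\sign{g_{t,i}}]\bigr)$. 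Maximising damage therefore reduces to making the server-side sign $\sign{O_{t,i}}$ opposite to $\sign{g_{t,i}}$ on as many coordinates as possible, weighted by $|g_{t,i}|$.

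Next, I would write $O_{t,i}=K_{t,i}+K'_{t,i}$, where $K_{t,i}$ is beyond the adversaries' reach and $K'_{t,i}\in\{-\alpha Q,-\alpha Q+2,\ldots,\alpha Q\}$ is their aggregated contribution. Without loss of generality take $\sign{g_{t,i}}=+1$; the adversary-favourable event is then $K_{t,i}<-K'_{t,i}$, whose indicator is a non-decreasing function of $-K'_{t,i}$ both pointwise (for any fixed realisation of $K_{t,i}$) and in probability (as a CDF). It is therefore maximised by pushing $-K'_{t,i}$ to its largest allowed value, namely $K'_{t,i}=-\alpha Q$. Symmetrising for $g_{t,i}<0$ yields the per-coordinate rule $K'_{t,i}=-\alpha Q\,\sign{g_{t,i}}$; stacking coordinates recovers the announced $K'_t=\sum_{v\in\calB}(-\signgradtrue_t)=\alpha Q(-\signgradtrue_t)$.

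The step I expect to be the main obstacle is arguing that this is a \emph{strongest} attack in the full omniscient Byzantine sense, rather than merely an expected-case optimum over honest noise. An omniscient adversary could in principle condition its vote on the realised $K_t$ or on other side information; the pointwise part of the monotonicity argument handles this, since for every individual realisation of $K_{t,i}$ the extremal choice $-\alpha Q\,\sign{g_{t,i}}$ is weakly optimal — it inflicts the wrong sign whenever \emph{any} vote can, and is indifferent otherwise. A secondary subtlety is confirming that the $\lambda$-cross term from the smoothness expansion cannot be exploited to open a better attack; because its magnitude is $O(\eta^2\lambda)$ while the leading damage term is $O(\eta)$, the extremal per-coordinate choice remains optimal in the small-$\eta$ regime relevant to the rest of the analysis.
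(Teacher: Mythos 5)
Your proposal is correct and reaches the same conclusion as the paper, but it is genuinely more formal than the paper's own argument, which is essentially qualitative: the paper asserts that, because signSGD filters out magnitudes, the adversary's only lever is the number of flipped signs, and then states the two extremal conditions ($\sign{K'_{i,t}}\neq\sign{\gradtrue_{i,t}}$ and maximal displacement) without deriving a damage functional or proving dominance. You supply the two missing pieces. First, you derive from the smoothness expansion that the only attack-sensitive first-order term in $\obj_{t+1}-\obj_t$ is $-\eta\,\gradtrue_t^{\top}\sign{O_t}$, which makes precise in what sense flipping server-side signs \emph{is} the damage (the paper leaves ``damage'' undefined). Second, your per-coordinate pointwise-dominance argument --- that $\mathbf{1}[K_{t,i}<-K'_{t,i}]$ is non-decreasing in $-K'_{t,i}$ for \emph{every} realisation of the honest votes $K_{t,i}$ --- is exactly the right way to handle omniscience: it shows the extremal vote is weakly optimal even against an adversary that conditions on the realised honest votes, not merely optimal in expectation. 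This is a real strengthening, since the theorem's whole point is robustness to adversaries who see $K_{t,i}$. Two caveats you already flag, and which apply equally to the paper's version: optimality is established for the first-order (small-$\eta$) proxy of the damage rather than the exact increment of $f$, and the argument is single-step (neither proof rules out an adversary sacrificing damage at step $t$ for larger damage later; the paper waves this away by stipulating that adversaries cannot anticipate future values of $\obj$). A final minor point: $\|\sign{O_t}\|^2=d$ requires a tie-breaking convention when a coordinate of $O_t$ is zero, but this does not affect the weak-dominance conclusion.
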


\begin{proof}
Due to the sign-based nature of signSGD, adversaries cannot manipulate gradient magnitudes but can only influence the direction of the majority vote. This restricts attack strategies to merely altering the number of incorrect signs in $\gradest_t$. Since signSGD acts as a "filter," removing magnitude information, and adversaries have access only to the gradient estimate $\gradest_{t, w}$ of the honest workers and the true gradient $\gradtrue_t$ at the current time $t$, without the ability to brute-force all possible future values of $f(x)$ to devise a more sophisticated attack, the optimal strategy for maximally disrupting the convergence of signSGD is purely directional: flipping as many signs as possible to push away the the outcome of the objective function $\obj$ from its closest convergent value.
The \emph{strongest attack} maximizes the deviation from the true gradient direction by ensuring: \begin{itemize} \item $\forall \gradtrue_{i,t}, \quad \sign{K'_{i,t}} \neq \sign{\gradtrue_{i,t}}$.
\item $\forall \gradtrue_{i,t}, \quad K'_{i,t} = \argmax{K'_{i,t}} |\gradtrue_{i,t} - K'_{i,t}|$.
\end{itemize}
Thus, adversaries optimally shift $\gradest_i$ in the opposite direction of $\gradtrue_i$ before the majority vote finalizes the sign.
Applying this to all $d$ dimensions maximizes the number of flipped gradient directions, making it the strongest possible attack within the constraints of sign-based SGD.
\end{proof}

\subsection{Convergence of signSGD in Presence of Omniscient Adversaries}
First, we need to bound the probability of a worker guessing the wrong sign. The following proof closely follows the approach in \cite{bernstein2018signsgd}, specifically their Lemma 1.

\begin{lemma}
\textnormal{(Accuracy of Sign Guessing).}
\label{Lemma 1}
Let $\gradest_i$ be an unbiased stochastic approximation to the gradient component $\gradtrue_i$, with variance bounded by $\sigma^2_i$. Assume the noise distribution is unimodal and symmetric. Define the signal-to-noise ratio (SNR) as $S_i := \frac{|\gradtrue_i|}{\sigma_i} > 0$. Then, we have that:
\begin{equation*}
    1 - p = \Prob[\sign{\gradest_i} \neq \sign{\gradtrue_i}] \leq \frac{1}{2} - \frac{S_i}{2\sqrt{4 + S_i^2}}.
\end{equation*}
which in all cases is less than or equal to $\frac{1}{2}$. Thus,

\begin{equation*}
    \frac{p(1 - p)}{(p - \frac{1}{2})^2} \leq \frac{4}{S_i^2}
\end{equation*}
\end{lemma}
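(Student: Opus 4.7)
The plan is to first reduce the sign-flip event to a two-sided tail probability of the centred noise via the symmetry assumption, then invoke a Cantelli-type inequality tailored to symmetric unimodal laws, and finally obtain the second displayed bound by a short algebraic manipulation.

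I would begin by fixing an index $i$ and, without loss of generality, assuming $\gradtrue_i > 0$; the case $\gradtrue_i < 0$ follows by flipping signs and $\gradtrue_i = 0$ is vacuous since $S_i = 0$. Setting $Y := \gradest_i - \gradtrue_i$, the sign-flip event becomes $\{Y \leq -\gradtrue_i\}$, and by the variance and unimodal-symmetric noise assumptions $Y$ is symmetric unimodal about $0$ with variance at most $\sigma_i^2$. Symmetry about $0$ then gives
\begin{equation*}
1 - p \;=\; \Prob[Y \leq -\gradtrue_i] \;=\; \Prob[Y \geq \gradtrue_i] \;=\; \tfrac{1}{2}\,\Prob[|Y| \geq \gradtrue_i].
\end{equation*}

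Next I would apply the sharp unimodal-symmetric tail bound used in \cite[Lemma~1]{bernstein2018signsgd}: for any symmetric unimodal $Y$ about $0$ with variance at most $\sigma^2$ and any $t > 0$,
\begin{equation*}
\Prob[|Y| \geq t] \;\leq\; 1 - \frac{t}{\sqrt{4\sigma^2 + t^2}}.
\end{equation*}
Plugging in $t = \gradtrue_i$ and $\sigma = \sigma_i$ and halving yields the first displayed bound, which is manifestly at most $\tfrac{1}{2}$ because $S_i \geq 0$. For the second bound, the first inequality rearranges to $p - \tfrac{1}{2} \geq S_i/(2\sqrt{4+S_i^2}) > 0$, hence $(p - \tfrac{1}{2})^2 \geq S_i^2/[4(4+S_i^2)]$, and using the identity $p(1-p) = \tfrac{1}{4} - (p - \tfrac{1}{2})^2$ I would conclude
\begin{equation*}
\frac{p(1-p)}{(p - \tfrac{1}{2})^2} \;=\; \frac{1}{4(p - \tfrac{1}{2})^2} - 1 \;\leq\; \frac{4 + S_i^2}{S_i^2} - 1 \;=\; \frac{4}{S_i^2}.
\end{equation*}

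The hard part will be justifying the Cantelli-type inequality in the second step, which is the only non-routine ingredient; everything else is symmetry or algebra. The cleanest route I would take is the Khintchine mixture representation of symmetric unimodal laws, writing $Y \stackrel{d}{=} UV$ with $U \sim \mathrm{Unif}[-1,1]$ independent of a nonnegative $V$ satisfying $\E[V^2] = 3\Var(Y)$, so that $\Prob[|Y| \geq t] = \E[(1 - t/V)_+]$, and then maximising this functional over laws of $V$ compatible with the second-moment constraint. Otherwise the inequality can be cited directly from \cite{bernstein2018signsgd}, since the lemma is explicitly stated as following their argument.
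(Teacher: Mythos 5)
Your overall structure is right and your second half is exactly the paper's argument: from $p-\tfrac12 \ge \frac{S_i}{2\sqrt{4+S_i^2}}$ you square, invert, and use $p(1-p)=\tfrac14-(p-\tfrac12)^2$ to get $\frac{p(1-p)}{(p-1/2)^2}\le \frac{4}{S_i^2}$; that part is complete. The gap is in the first half, where you assert the unified tail bound $\Prob[|Y|\ge t]\le 1-\frac{t}{\sqrt{4\sigma^2+t^2}}$ and attribute it to Lemma~1 of \cite{bernstein2018signsgd}. That is not what that lemma states: it gives the piecewise Gauss bound $\Prob[\sign{\gradest_i}\ne\sign{\gradtrue_i}]\le \frac{2}{9S_i^2}$ for $S_i>\frac{2}{\sqrt{3}}$ and $\le \frac12-\frac{S_i}{2\sqrt{3}}$ otherwise. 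Your proposed Khintchine-mixture maximization does not rescue this either: carrying out $\sup \E[(1-t/V)_+]$ subject to $\E[V^2]=3\sigma^2$ yields precisely that piecewise Gauss bound as the sharp extremal (the concave envelope of $w\mapsto(1-t/\sqrt{w})_+$ is linear below $w=9t^2/4$ and equals the function above it), and the single formula $1-\frac{t}{\sqrt{4\sigma^2+t^2}}$ is strictly weaker than the Gauss bound in both regimes, so it cannot come out of the extremal computation directly.

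What is missing is therefore exactly the substantive content of the paper's proof: the case-by-case verification that both branches of the Gauss bound are dominated by $\frac12-\frac{S_i}{2\sqrt{4+S_i^2}}$. The second branch is a one-liner ($\sqrt{3}\le\sqrt{4+S_i^2}$), but the first branch requires checking $\frac{2}{9S_i^2}\le\frac12-\frac{S_i}{2\sqrt{4+S_i^2}}$ on all of $S_i>\frac{2}{\sqrt{3}}$, including the boundary value (where the right side equals $\tfrac14$ versus $\tfrac16$ on the left) and the limit $S_i\to\infty$ (where the two sides decay as $\frac{2}{9S_i^2}$ versus $\frac{1}{S_i^2}$). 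The claimed inequality is true, so the gap is fillable, but as written you have black-boxed the only non-routine step of the lemma behind a citation that does not contain it.
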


\paragraph{Proof Sketch and Remarks.} For the first equality, the proof consists in a direct straightforward computation, obtained by comparing the inequalities with the bound proposed in Lemma 1 in \cite{bernstein2018signsgd}. As for the second inequality, a direct straightforward computation can be found in appendix \ref{prooflemma} along with the full proof of Lemma \ref{Lemma 1}. In addition to the proofs (deferred to the appendix), we can add the two following remarks.

\paragraph{Monotonicity.} The probability of guessing the wrong sign monotonically decreases from $\frac{1}{2}$ to 0 as the SNR increases. This behavior makes sense, as a higher gradient compared to the noise provides more information.

\paragraph{Critical Points.} When the norm of the gradient approaches 0, the SNR tends to 0, and the bound results in a probability of guessing the wrong sign of $\frac{1}{2}$, as in random guessing. However, it has been argued in the past that when the gradient becomes such, BFT stops \cite{mhamdi2018hidden, baruch2019little}, since a Byzantine worker can exploit small gradient norms to hinder learning\footnote{more precisely, small ratio between the norm of the gradients from correct workers and the variance of the correct workers' gradients increase the likelihood of successful attacks that exploit legitimate disagreements between honest workers~\cite{momentum}}. Hence, it is necessary to introduce conditions on the accuracy of gradient estimator, which is why momentum techniques \cite{momentum}) are often introduced to improve the convergence speed of gradient-based methods.

\begin{theorem} \textnormal{(Non-convex Convergence Rate of signSGD with Majority Vote Against Omniscient Adversaries).}
Execute Algorithm \ref{singSGD algo} for $K$ iterations under Assumptions 1 to 4. Disable weight decay ($\lambda = 0$). Set the learning rate $\eta$, and mini-batch size $n$ for each worker as
\begin{equation*}
    \eta = \sqrt{\frac{f_0 - f_*}{\|L\|_{1}K}}, \quad n = K.
\end{equation*}
Assuming that a fraction $\alpha < 1 - \frac{1}{2p}$, where $p = \Prob[\text{sign}(\gradest_i) = \text{sign}(\gradtrue_i)]$ (i.e., the probability of a worker correctly guessing the sign of $\gradtrue_i$ under the unimodal, symmetric gradient noise assumption), of the $Q$ workers behaves adversarially as defined by omniscient adversaries, then the majority vote converges at rate:
\begin{equation*}
  \left[\frac{1}{K}\sum^{K - 1}_{k = 0} E\|g_k\|_1\right]^2 \leq \frac{4}{\sqrt{N}}  \left[\frac{\|\sigma\|_1}{4\sqrt{Q}}\frac{\sqrt{(1- \alpha)p}}{((1 - \alpha)p - \frac{1}{2})}  + \sqrt{\|L\|_1(f_0 - f_*)} \right] ^2
\end{equation*}
where $N=K^2$ is the total number of stochastic gradient calls per worker up to step $K$.
\end{theorem}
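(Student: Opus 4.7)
The plan is to follow the descent-lemma-plus-majority-vote pipeline of Bernstein et al., modified by treating the adversarial contribution as the worst case identified in Theorem \ref{theorem_strongest_attack}. First, I would apply Assumption 2 (smoothness) to the update $x_{t+1}=x_t-\eta\,\text{sign}(O_t)$ with $\lambda=0$, decompose the inner product coordinate-wise, and take expectation over the honest workers' stochastic gradients. This yields the familiar per-step inequality
\begin{equation*}
\E[f(x_{t+1})]-f(x_t)\;\le\;-\eta\sum_{i=1}^d|g_{i,t}|\bigl(1-2\,\Prob[\sign{O_{i,t}}\neq\sign{g_{i,t}}]\bigr)+\frac{\eta^2\|L\|_1}{2}.
\end{equation*}
All of the work in the adversarial setting is hidden in the per-coordinate failure probability $\Prob[\sign{O_{i,t}}\neq\sign{g_{i,t}}]$.

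Next, using Theorem \ref{theorem_strongest_attack}, I would instantiate that worst case: every one of the $\alpha Q$ adversaries casts a vote equal to $-\sign{g_{i,t}}$. Letting $Z_i$ count the honest workers whose sign is correct, $Z_i\sim\mathrm{Bin}((1-\alpha)Q,p)$ with mean $\mu_i=(1-\alpha)Qp$ and variance $\tau_i^2=(1-\alpha)Qp(1-p)$, and the server's majority is wrong precisely when $Z_i-((1-\alpha)Q-Z_i)-\alpha Q\le 0$, i.e.\ $Z_i\le Q/2$. Cantelli's one-sided inequality then gives
\begin{equation*}
\Prob[Z_i\le Q/2]\;\le\;\frac{\tau_i^2}{\tau_i^2+(\mu_i-Q/2)^2}\;\le\;\frac{(1-\alpha)p(1-p)}{Q\bigl((1-\alpha)p-\tfrac12\bigr)^2}.
\end{equation*}
The threshold $\mu_i-Q/2=Q((1-\alpha)p-\tfrac12)$ is strictly positive exactly when $\alpha<1-\tfrac{1}{2p}$, which is where the theorem's assumption on the adversary fraction enters. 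Mini-batching of size $n$ reduces the per-worker noise, so in applying Lemma \ref{Lemma 1} the relevant SNR becomes $S_i=|g_i|\sqrt n/\sigma_i$; combining Cantelli with Lemma \ref{Lemma 1} (and bounding $1-p\le 1$ to obtain the clean $\sqrt{(1-\alpha)p}$ factor) yields
\begin{equation*}
1-2\,\Prob[\sign{O_{i,t}}\neq\sign{g_{i,t}}]\;\ge\;1-\frac{\|\sigma\|_1}{|g_{i,t}|\sqrt{n\,Q}}\cdot\frac{\sqrt{(1-\alpha)p}}{(1-\alpha)p-\tfrac12},
\end{equation*}
after the standard coordinate-wise manipulation.

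I would then substitute this back into the descent inequality, telescope from $t=0$ to $K-1$, and use Assumption 1 to bound the left-hand side by $f_0-f_*$. Rearranging gives
\begin{equation*}
\frac{1}{K}\sum_{t=0}^{K-1}\E\|g_t\|_1\;\le\;\frac{f_0-f_*}{\eta K}+\frac{\eta\|L\|_1}{2}+\frac{\|\sigma\|_1}{\sqrt{nQ}}\cdot\frac{\sqrt{(1-\alpha)p}}{(1-\alpha)p-\tfrac12}.
\end{equation*}
Plugging in $\eta=\sqrt{(f_0-f_*)/(\|L\|_1 K)}$ and $n=K$ (so that $N=nK=K^2$) balances the first two terms into $2\sqrt{\|L\|_1(f_0-f_*)/K}$; combining with the noise term, pulling the common $1/\sqrt{K}=N^{-1/4}$ factor out, and squaring produces exactly the advertised rate, with the $\tfrac{1}{4\sqrt Q}$ arising from absorbing numerical constants and the $\sqrt Q$ from the Cantelli denominator.

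The main obstacle will be Step 2: carrying the Cantelli bound through cleanly so that (i) the threshold condition is exactly $\alpha<1-\tfrac{1}{2p}$, no worse, and (ii) the $\sqrt Q$ scaling from aggregating $(1-\alpha)Q$ honest votes survives the transition from a variance-based bound to an SNR-based bound via Lemma \ref{Lemma 1}. A secondary bookkeeping hurdle is aligning the $\|\sigma\|_1$ versus coordinate-wise $\sigma_i$ accounting after applying Lemma \ref{Lemma 1} at each coordinate and summing; this is handled by the $\ell_1$-sum structure of the smoothness term and standard Cauchy-Schwarz/rearrangement arguments, but it must be done carefully to match the stated constants.
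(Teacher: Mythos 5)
Your proposal is correct and follows essentially the same route as the paper: reduce to the worst-case attack of Theorem \ref{theorem_strongest_attack}, model the correct honest votes as $\mathrm{Binomial}((1-\alpha)Q,p)$, bound the per-coordinate failure event $Z<Q/2$ via Cantelli plus Lemma \ref{Lemma 1}, and then feed the resulting failure probability into the Bernstein-et-al.\ descent/telescoping argument (a stage the paper itself omits but which you usefully spell out). The only imprecision is in your middle step: your displayed Cantelli bound $\tau_i^2/(\mu_i-Q/2)^2$ scales as $1/Q$ and $1/S_i^2$, so to reach the $1/\sqrt{nQ}$, $\sigma_i/|g_i|$ form needed for the $\|\sigma\|_1$ accounting you must use $\frac{1}{1+y}\le\frac{1}{2\sqrt{y}}$ (as the paper does) and then Lemma \ref{Lemma 1} to convert $\sqrt{p(1-p)}$ into $2(p-\tfrac12)/S_i$ followed by $p-\tfrac12\le\sqrt{p}$ — not the bound $1-p\le 1$, which would discard the $1/S_i$ factor entirely.
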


\begin{proof}
\noindent We aim to bound the failure probability of the vote in the worst-case scenario, when adversaries behave the damage, and use this bound to derive a convergence rate. As demonstrated in \emph{Theorem \ref{theorem_strongest_attack}}, the strongest attack omniscient adversaries can craft is to \emph{send the opposite sign value of the real gradient for each entry $i$}. Thus, we limit our analysis to this scenario to find an upper bound for the convergence rate of signSGD with majority vote.

Consider $(1 - \alpha)Q$ honest machines and $\alpha Q$ adversaries. The honest workers compute a stochastic gradient estimate, evaluate its sign, and transmit it to the server. The adversaries send the opposite sign of the true gradient $g(x)$. Depending on the adversaries' proportion $\alpha$ and on the probability $p$ of a honest workers of correctly guessing the sign, the honest workers will, on average, win the vote. Our goal is to determine how many adversarial workers, $\alpha Q$, can be tolerated, depending on $p$, while still allowing signSGD with majority vote to converge to a critical point.

We now bound the probability of failure for a gradient estimate $\gradest_i$. For a given gradient component $\gradtrue_i$, let the random variable $Z \in [0, Q]$ denote the number of correct sign bits received by the parameter server. Let $G$ and $B$ be the random variables denoting the number of honest and adversarial workers, respectively, who sent the correct sign bit. In our scenario, $B = 0$, as adversaries always send the wrong sign when executing their \emph{strongest attack}, thus never contributing to $Z$. We can now express $Z$ as follows:
\begin{align*}
    &Z = G + B = G, \\
    &G \sim \text{Binomial}\left[(1 - \alpha)Q, p\right], \\
    &\E[Z] = (1 - \alpha)Qp, \\
    &\Var[Z] = (1 - \alpha)Q p(1 - p).
\end{align*}
 The vote fails on index $i$ if $Z < \frac{Q}{2}$, which occurs with probability:
\begin{align*}
    \Prob[\text{vote fails for }  i^\text{th} \text{ coordinate}] 
    &=\Prob\left[Z < \frac{Q}{2}\right] \\
    &= \Prob\left[\E[Z] - Z \geq \E[Z] - \frac{Q}{2}\right] \\
    &\leq \frac{1}{1 + \frac{(\E[Z] - \frac{Q}{2})^2}{\Var[Z]}} \quad \text{(Cantelli's inequality)} \\
    &\leq \frac{1}{2}\sqrt{\frac{\Var[Z]}{(\E[Z] - \frac{Q}{2})^2}} \quad \text{(since } 1 + x^2 \geq 2x \text{)} \\
    &= \frac{1}{2\sqrt{Q}}\frac{\sqrt{(1 - \alpha)p(1- p)}}{((1 - \alpha)p - \frac{1}{2})} \\
    &= \frac{\sqrt{(1- p)}}{2\sqrt{Q}}\frac{\sqrt{(1 - \alpha)p}}{((1 - \alpha)p - \frac{1}{2})} \\
    &= \frac{\sqrt{(1- p)p}}{2\sqrt{Q}}\frac{\sqrt{(1 - \alpha)}}{((1 - \alpha)p - \frac{1}{2})}\sqrt{\frac{(p - \frac{1}{2})^2}{(p - \frac{1}{2})^2}}\\
    &\leq \frac{1}{4S_i\sqrt{Q}}\frac{\sqrt{(1 - \alpha)}(p - \frac{1}{2})}{((1 - \alpha)p - \frac{1}{2})} \quad \text{(Lemma \ref{Lemma 1}).}
\end{align*}

From the above result, we infer that for the probability to be non-negative we need $\alpha < 1 - \frac{1}{2p}$ and $p > \frac{1}{2}$.

The second stage of the proof involves deriving the convergence rate by substituting this bound into the convergence analysis of signSGD as presented by \cite{bernstein2018signsgd}. For brevity, we omit these details (refer to the original paper) and directly conclude:
\begin{equation*}
\begin{split}
    \left[\frac{1}{K}\sum^{K - 1}_{k = 0} \E\|g_k\|_1\right]^2 \leq \frac{4}{\sqrt{N}} \left[\frac{\|\sigma\|_1}{4\sqrt{Q}}\frac{\sqrt{(1 - \alpha)}(p - \frac{1}{2})}{((1 - \alpha)p - \frac{1}{2})} + \sqrt{\|L\|_1(f_0 - f_*)}\right]^2.
\end{split}
\end{equation*}
\end{proof}

The following remarks can be made from the analysis of the bound, with $\alpha \leq 1 - \frac{1}{2p}$ and $p > \frac{1}{2}$.

\paragraph{Linear Relationship Between $\alpha$ and $p$:} The allowed percentage of adversaries, $\alpha$, that still permits convergence to a critical point, increases linearly with the probability $p$ of the honest workers correctly identifying the gradient sign at a given index $i$. This relationship implies that the more accurate the honest workers are ("better" in terms of predicting gradient signs), the higher the tolerance for the presence of adversaries. Specifically, taking the extreme case where $p = 1$ leads to $\alpha < 1 - \frac{1}{2p} = \frac{1}{2}$, indicating that up to, but not including, half of the total workers can be adversarial while still achieving convergence on average.

\paragraph{Implications of the Bound on $\alpha$ and $p$:} The bounds $\alpha \leq 1 - \frac{1}{2p}$ and $p > \frac{1}{2}$ inherently imply that $\alpha < \frac{1}{2}$ and $p > \frac{1}{2}$ are necessary conditions for convergence. This aligns intuitively with the design of Majority Vote mechanisms, which are robust against up to half of the participants being adversarial in signSGD contexts. Moreover, if $p \leq \frac{1}{2}$, this suggests that random guessing or even inverting the sign might yield better results. This highlights the importance of having a majority of honest workers who are more likely than not to correctly predict the sign of the true gradient.

\paragraph{Analysis of the Fraction Involving $p$ and $\alpha$:} Focusing on the fraction $\frac{\sqrt{(1- \alpha)}(p - \frac{1}{2})}{(1 - \alpha)p - \frac{1}{2}}$, it is observed that while the numerator always exceeds the denominator when considering $\alpha \leq 1 - \frac{1}{2p}$ and $p > \frac{1}{2}$, its rate of increase is slower for all growing values of $p$ within the interval $[\frac{1}{2}, 1]$ or for diminishing values of $\alpha$. This behavior suggests that the bound not only provides a meaningful restriction but also reflects the expectation of diminishing failure rates as the probability $p$ of correct gradient sign prediction increases or the percentage of adversaries diminish.

These considerations underscore the nuanced relationship between the accuracy of honest workers' gradient sign predictions and the system's resilience to adversarial presence, elucidating the theoretical foundation for ensuring convergence in signSGD under adversarial conditions.

\section{Comparative Analysis}
\label{sec:comparative}

In \cite{jin2024sign}, the authors claimed to have proven the BFT of generalized sign-based compressors under both heterogeneous and homogeneous conditions. The main points of their study can be summarized as follows:

\begin{itemize}
    \item Threat Model: non-traditional Byzantine framework. 
    \item Incorrect aggregation considering both $\calB$ adversaries and $M$ honest workers, where $Q = \calB + M$.
\end{itemize}

In contrast to our scenario, their threat model framework measures the strength of attackers based on the probability that they will send the gradient’s opposite sign, this is inconsistent with the fact that Byzantine attackers are by definition arbitrary, omniscient and are allowed to know what prediction we make about their behaviour. Their framework lacks a formal definition of the strongest damage and does not address cooperation among attackers. In addiction, attackers use the same procedure as regular workers to estimate the gradient. The concept of omniscience is not explicitly defined, yet the framework considers a heterogeneous environment where each honest worker has a probability \( p_i \) of identifying the correct gradient $ \forall i \in M$.

Theorem 4 of \cite{jin2024sign} briefly introduces the strongest attack, noting that if the average probability of adversary workers sending the gradient's opposite sign is 1, then a limit can be derived on the maximum number of adversary workers required to maintain convergence. Average capacity of attackers is defined as a function of the average probability of honest workers recognizing the correct sign (see Definition 2, Section V in \cite{jin2024sign}).
This results in a probabilistic measure of the strength of the adversaries, which is maximized (on coordinate $i$) when \( q_{j,i}^{(t)} = 1 \), $\forall j \in \calB$.

Within their threat model the authors demonstrate the convergence of the signSGD algorithm through two steps. First, they limit the probability of incorrect aggregation with only \( M \) honest workers. Second, they introduce the presence of \( \calB \) attackers, each with a probability of choosing the gradient's opposite sign. This provides a limit on the number of adversarial workers, depending on the honest workers’ probability of selecting the correct sign. Step 1 is proved in Theorem 1, while Step 2 is proved in Theorem 4 of~\cite{jin2024sign}.

In the following section, we analyze our Byzantine scheme compared to the one presented in~\cite{jin2024sign}.

\subsection{Incorrect Aggregation Considering Both \(\calB\) Attackers and \(M\) Honest Workers}
By defining \(\calB\) adversaries as in \cite{jin2024sign}, the non-convergence bound is established using a positive constant \(c\) that bounds the probability of incorrect aggregation when more than half of the workers send the wrong sign. This considers both honest workers  and attackers. The strongest attack leads to the same bound regarding adversarial workers: \(\alpha Q < (2p - 1) M\), where \( p \) is the probability of a worker correctly identifying the sign. Suppose, without loss of generality, that $p_{i,j}^{(t)} = p$, $\forall i = 1,...,d$, $\forall t = 1,...,T$, $\forall j \in M$. Let \( K \in \calB\) denote the subset of attackers such that, given \( Q \) total workers, \( K = \alpha  Q \) and \( M = (1 - \alpha) Q \).

\paragraph{Proof Sketch.}
Let $\alpha \leq 1 - \frac{1}{2p}$, thus $ K < (1 - \frac{1}{2p})Q$ as in our findings, a straightforward computation leads to $K < (2p - 1) M$.

Let $K$ denote the bound on the $i$-th dimension, as consequence of the constant probability $p$ across the $d$-dimensions, the latter coincide with the bound found in the strongest attack claimed in Theorem 4 of \cite{jin2024sign}, by considering $\bar p = 1 -p$ in accordance to the notation used by the authors.\\
Lastly, a direct comparison on the probability bounds is challenging because a closed form to compute the constant \( c \) is not provided. Instead, leveraging the noise distribution assumption, we present a closed form based on aggregation errors.

In conclusion, while both studies examine the incorrect aggregation and convergence bounds of signSGD against adversaries, our work extends the framework with a comprehensive definition of adversary omniscience and strongest damage, considers the collusion of attackers and ultimately, provides explicit probability bounds without resorting to an unknown constant.

\section{Experimental Results}
\label{sec:experiments}
Similarly to what was done in \cite{bernstein2018signsgd}, we first test out findings on a toy example, before testing them on the standard MNIST dataset. The setting is a 1000-dimensional quadratic with $\mathcal{N}(0, 1)$ noise added to each gradient component. The results are shown in \ref{fig:toy_example_batch_1_500} and demonstrate the influence of the batch size on the convergence behavior in presence of varying proportion of adversaries. 

\begin{figure}[ht!]
    \centering
    \begin{subfigure}[b]{0.49\textwidth}
        \includegraphics[width=\textwidth]
        {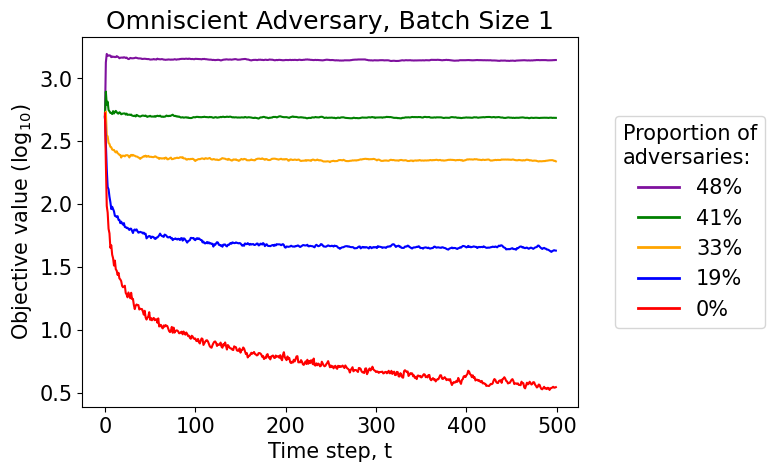}
    \end{subfigure}
    \hfill
    \begin{subfigure}[b]{0.49\textwidth}
        \includegraphics[width=\textwidth]
        {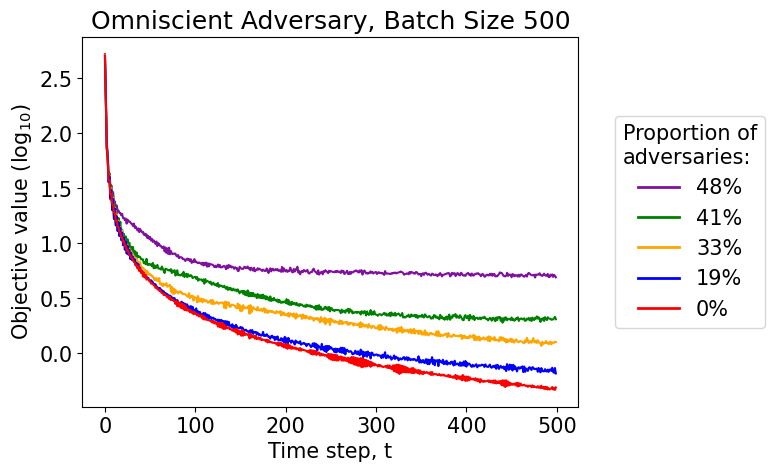} 
    \end{subfigure}
    \caption{Influence of the batch size on the convergence of signSGD with majority vote in the presence of omniscient adversaries, for the toy example with 27 workers and varying numbers of adversaries.}
    \label{fig:toy_example_batch_1_500}
\end{figure}

The advantage of the toy example is that the assumption of Gaussian noise and the assumption of the adversaries knowing the exact gradient are met perfectly. 
The toy example notebook can be found using this \href{https://colab.research.google.com/drive/1vVhphWR9kPUxGH--dJAMoqIxedUTQvDh#scrollTo=cpxNFAP-MdQ0}{\textbf{link}}.

As a more relevant experiment, the MNIST dataset is used, which consists of 60000 training samples and 10000 testing samples. Figures \ref{fig:mnist64}, \ref{fig:mnist256} and \ref{fig:mnist512} show on the left the training loss and on the right the test accuracy over all iterations.

\begin{figure}[h!]
    \centering
    \begin{subfigure}[b]{0.49\textwidth}
        \includegraphics[width=\textwidth]
        {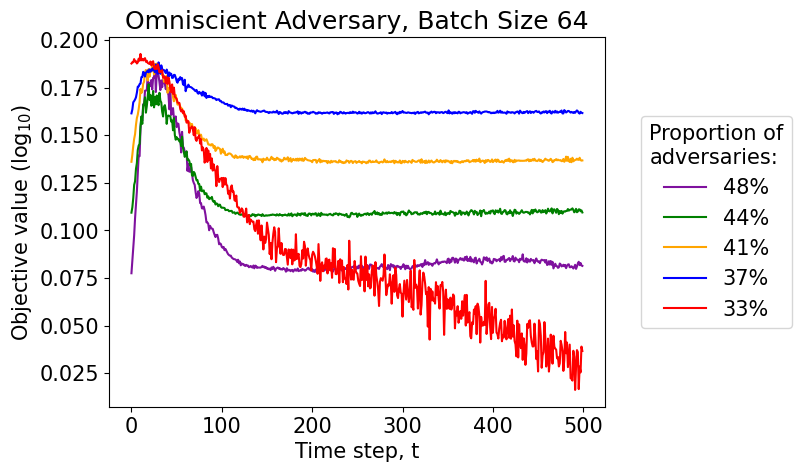}
    \end{subfigure}
    \hfill
    \begin{subfigure}[b]{0.49\textwidth}
        \includegraphics[width=\textwidth]
        {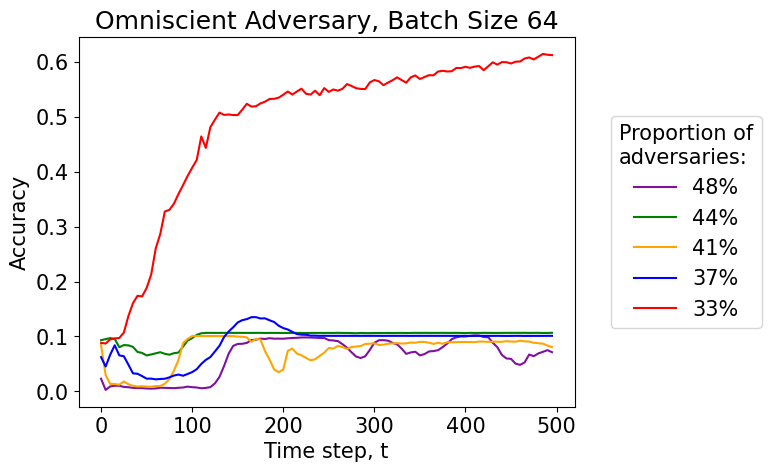}
    \end{subfigure}
    \caption{Training loss and test accuracy for batch size 64 and 500 iterations shows no convergence for more than 33\% adversaries.}
    \label{fig:mnist64}

    \centering
    \begin{subfigure}[b]{0.49\textwidth}
        \includegraphics[width=\textwidth]
        {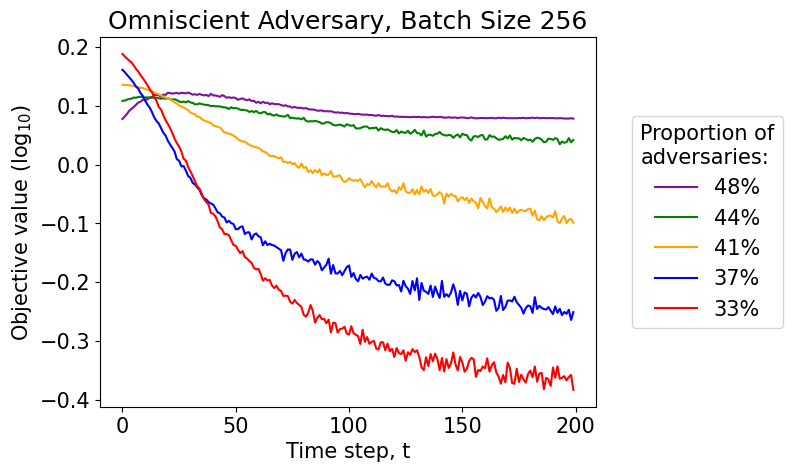}
    \end{subfigure}
    \hfill
    \begin{subfigure}[b]{0.49\textwidth}
        \includegraphics[width=\textwidth]
        {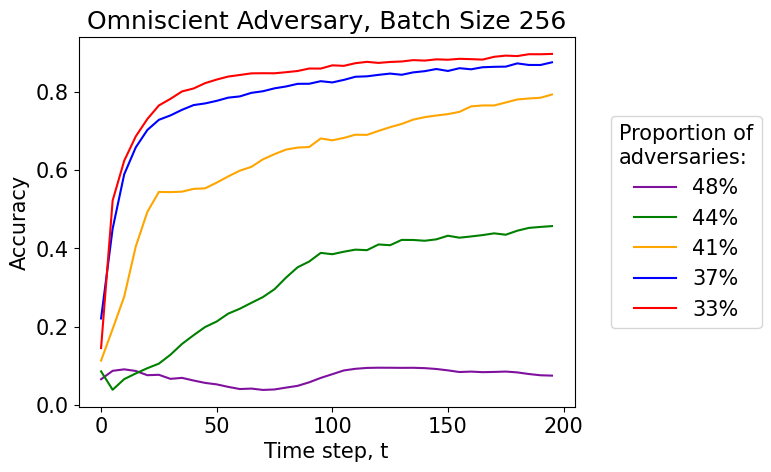}
    \end{subfigure}
    \caption{Training loss and test accuracy for batch size 256 and 200 iterations shows no convergence for 48\% adversaries.}
    \label{fig:mnist256}

    \centering
    \begin{subfigure}[b]{0.49\textwidth}
        \includegraphics[width=\textwidth]
        {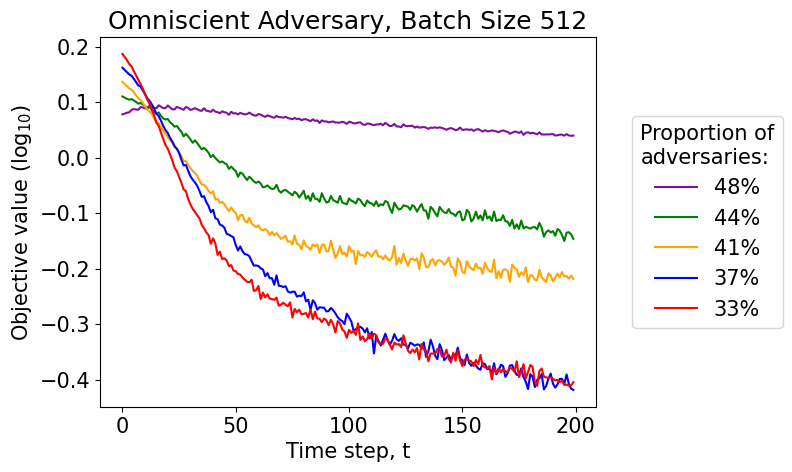} 
    \end{subfigure}
    \hfill
    \begin{subfigure}[b]{0.49\textwidth}
        \includegraphics[width=\textwidth]
        {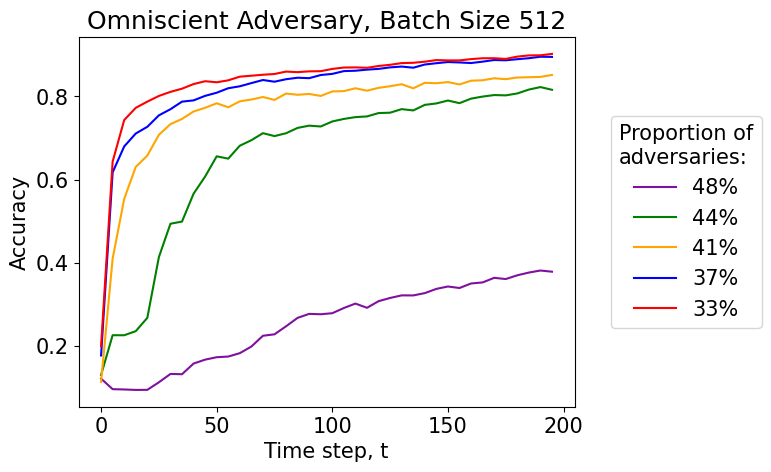} 
    \end{subfigure}
    \caption{Training loss and test accuracy for batch size 512 and 200 iterations shows convergence for up to 48\% adversaries.}
    \label{fig:mnist512}
\end{figure}

All of the $\calB$ omnisicent adversaries compute a gradient on their minibatch.
Those $\calB$ gradients are aggregated on an adversary server with a majority vote to estimate the true gradient and the outcome $\gradest$ is communicated to all adversaries.
Every adversary sends then the opposite of $\gradest$ to the main server where the gradients of all the honest and dishonest workers are aggregated by a majority vote. 
The CNN is developed in a PyTorch environment and using Cuda. The MNIST notebook can be found using this \href{https://colab.research.google.com/drive/1vQpUZgRLQjmRx4AQnt2UX_iorjHVj9rH#scrollTo=NmR53BEtMdRP}{\textbf{link}}.

The experiments with different batch sizes show that convergence and the learning of useful models highly depends on the batch size, which correlates to the probability $p$ that the honest workers to get the sign of the gradients right. It can be seen that for big enough batch sizes with up to 48\% adversaries useful models can be learnt. All details concerning the experimental settings can be found in the appendix \ref{experimental_setting}.

\section{Conclusion}
\label{sec:conclusion}
In conclusion, this work closes the gap on the Byzantine fault tolerance of signSGD in distributed learning settings. We provide a formal definition of the strongest attack in the context of the signSGD algorithm, thus focusing on the majority vote aggregation rule. We identify the maximum number of Byzantine workers that the algorithm can tolerate and ultimately provide the convergence rate under this scenario. Lastly, we compare our contribution with similar approaches in the literature that address the BFT properties of signSGD.

\paragraph{Limitations.} An important limitation that is beyond the scope of our work is the extent of vulnerability of signSGD when the model is of very high dimensionality. For instance, this latter limitation concerns not only signSGD but the standard formalism of BFT distributed learning~\cite{blanchard2017machine} in its entirety. Recently, lower bounds~\cite{jungle} have been proven, arguing for the practical difficulty~\cite{mhamdi2018hidden, baruch2019little} to secure distributed learning in very high dimension, and learning from different and heterogeneous sources~\cite{karimireddy2021byzantine}. These limitations take form either in the assumptions or in the security guarantees on robust machine learning as the latter are heterogeneity and dimension-dependent~\cite{el2022impossible}.
In this regard, signSGD will at least suffer from the same limitations when used with extremely large models, the precise loss of robustness that is specific to signSGD and not general to any distributed learning scheme is yet to be studied.

\bibliographystyle{alpha}
\bibliography{biblio}

\newpage

\appendix

\section{Proof of Lemma \ref{Lemma 1}: Accuracy of Sign Guessing}
\label{prooflemma}

\begin{proof}

\noindent From Lemma 1 of \cite{bernstein2018signsgd}, we know that:
\begin{equation*}
    1 - p = \Prob[\sign{\gradest_i} \neq \sign{\gradtrue_i}] \leq \begin{cases}
    \frac{2}{9}\frac{1}{S_i^2}, & \text{if } S_i > \frac{2}{\sqrt{3}} \\
    \frac{1}{2} - \frac{S_i}{2\sqrt{3}}, & \text{otherwise.}
    \end{cases}
\end{equation*}

We show that in both cases, this probability can be upper bounded by $\frac{1}{2} - \frac{S_i}{2\sqrt{4 + S_i^2}}$.

\emph{Case 1}: $S_i > \frac{2}{\sqrt{3}}$.
\begin{equation*}
    1 - p \leq \frac{2}{9}\frac{1}{S_i^2} \overset{!}{\leq} \frac{1}{2} - \frac{S_i}{2\sqrt{4 + S_i^2}} \iff \frac{4\sqrt{4 + S_i^2} + 9S_i^3}{18S_i^2\sqrt{4 + S_i^2}} \overset{!}{\leq} \frac{1}{2}
\end{equation*}
Notice that for $S_i > \frac{2}{\sqrt{3}}$, the left hand-side term monotonically decreases until $S_i \approx 1.52$ and monotonically increases after that value. In the first range $S_i \in \left]\frac{2}{\sqrt{3}}, 1.52\right[$, we have a highest value of $\approx 0.42 \leq \frac{1}{2}$, for  $S_i = \frac{2}{\sqrt{3}}$. In the second range, $S_i \in \left[1.52, \infty\right[$, we notice that for $S_i \to \infty$ the left hand side converges to the value $\frac{1}{2}$. Therefore, the bound in this case is satisfied. \\
\emph{Case 2}: $S_i \leq \frac{2}{\sqrt{3}}$.
\begin{equation*}
    1 - p \leq \frac{1}{2} - \frac{S_i}{2\sqrt{3}} \overset{!}{\leq} \frac{1}{2} - \frac{S_i}{2\sqrt{4 + S_i^2}} \iff  \sqrt{3} \overset{!}{\leq} \sqrt{4 + S_i^2}.
\end{equation*}

For the smallest value $S_i  = 0$, the term $\sqrt{4 + S_i^2}$ sufficiently upper bounds the inequality. Since this term monotonically increases for growing values of $S_i$, the inequality is satisfied also in \textit{Case 2} and the bound is proven.

Thus:
\begin{equation*}
    1 - p = \Prob[\sign{\gradest_i} \neq \sign{\gradtrue_i}] \leq \frac{1}{2} - \frac{S_i}{2\sqrt{4 + S_i^2}}.
\end{equation*}

Moreover, we can show that
\begin{equation*}
    \frac{p(1 - p)}{(p - \frac{1}{2})^2} \leq \frac{4}{S_i^2}
\end{equation*}
Indeed, by rearranging the terms we have:
\begin{align*}
& \quad \frac{1}{2} - \frac{S_i}{2 \sqrt{4 + S_i^2}} \geq 1 - p \\
&\iff  \frac{S_i}{2 \sqrt{4 + S_i^2}}\leq p - \frac{1}{2} \\
&\iff  \frac{1}{2}\leq (p - \frac{1}{2})(\frac{\sqrt{4 + S_i^2}}{S_i}) \\
&\iff \frac{1}{4} \leq (p - \frac{1}{2})^2(\frac{4}{S_i^2} + 1) \\
&\iff \frac{1}{4(p - \frac{1}{2})^2} - 1 \leq \frac{4}{S_i^2} \\
&\iff \frac{p(1 - p)}{(p - \frac{1}{2})^2} \leq \frac{4}{S_i^2} 
\end{align*}
This concludes our proof.

\end{proof}

\newpage

\section{Experimental Setting}
\label{experimental_setting}

\begin{table}[ht!]
    \centering
    \caption{Details for the Toy Example}
    \begin{center}
    \begin{small}
    \begin{tabular}{ll}
        \toprule
        Category & Details \\
        \midrule
        Objective Function & $0.5 \times \text{np.dot}(x, x)$ \\
        Parameter Vector Size & 1000 \\
        Noise Scale ($\sigma$) & 1.0 \\
        Device & CPU \\
        Total Workers & 27 \\
        Batch Size & 1, t (iteration counter) or 500  \\
        Iterations (T) & 500 \\
        Initial Learning Rate & 1.0 \\
        Learning Rate Schedule & $\text{initial\_lr} / \sqrt{t+1}$ \\
        Repeats & 5 (for worker tests), 1 or 3 (for adversary tests) \\
        Optimizer & signSGD with majority vote \\
        Adversaries & 
        \begin{tabular}{l}
            - Blind: Flipping based on individual gradient estimates \\
            - Omniscient: Flipping based on knowledge of true gradients
        \end{tabular} \\
        Evaluation Metrics & Objective value every iteration \\
        Machine & Windows, Intel Core i7 \\
        Computational Time & 
        \begin{tabular}{l}
            - approx. 3 sec for 1 repeat, batch size 1 and 500 iterations \\
            - approx. 3 min 19 sec for 1 repeat, batch size 500 and 500 iterations 
        \end{tabular} \\
        \bottomrule
    \end{tabular}
    \end{small}
    \end{center}
\end{table}

\begin{table}[ht!]
    \centering
    \caption{Details for the MNIST Experiment}
    \begin{center}
    \begin{small}
    \begin{tabular}{ll}
        \toprule
        Category & Details \\
        \midrule
        Dataset & MNIST \\
        Training Set & Split into 27 chunks for workers \\
        Test Set & Standard MNIST test set \\
        Transformations & 
        \begin{tabular}{l}
            - Center crop to 26x26, resize to 28x28 \\
            - Random brightness, contrast, saturation, and hue adjustments \\
            - Random rotation (±10 degrees) and affine transformation (±5 degrees) \\
            - Normalize with mean 0.1307 and std 0.3081
        \end{tabular} \\
        Model & Custom CNN with 2 conv layers and 2 linear layers \\
        Layers & 
        \begin{tabular}{l}
            - conv1: 1 input, 32 output, kernel size 3 \\
            - conv2: 32 input, 64 output, kernel size 3 \\
            - fc1: 9216 input, 128 output \\
            - fc2: 128 input, 10 output \\
            - Dropout: 0.25 and 0.5
        \end{tabular} \\
        Activation Functions & ReLU, log-softmax in output layer \\
        Device & GPU if available, otherwise CPU \\
        Total Workers & 27 \\
        Batch Size & 64, 256 or 512 (specified in title of plots) \\
        Iterations (T) & 200 (for batch size 256 and 512), 500 (for batch size 64) \\
        Learning Rate ($\eta$) & 0.0001 \\
        Weight Decay ($\lambda$) & 0.05 \\
        Momentum ($\beta$) & 0.9 \\
        Repeats & 1 \\
        Optimizer & signSGD with majority vote \\
        Loss Function & Negative Log-Likelihood Loss (F.nll\_loss) \\
        Adversaries & Simulated by flipping the sign of the aggregated adversary gradient \\
        Evaluation Metrics & Training loss and test accuracy every 5 iterations \\
        Computational Time & 
        \begin{tabular}{l}
            - approx. 50 min for 1 repeat, batch size 64 and 500 iterations \\
            - approx. 65 min for 1 repeat, batch size 512 and 200 iterations 
        \end{tabular} \\
        \bottomrule
    \end{tabular}
    \end{small}
    \end{center}
\end{table}

\end{document}